\documentclass[preprint]{elsarticle}
\usepackage{times}
\usepackage{amsthm,amssymb,amsmath}
\usepackage{amsfonts}
\usepackage{longtable}
\usepackage{footnote}
\usepackage{url}
\usepackage{subfig}
\usepackage{fmtcount}
\usepackage{enumerate}
\usepackage{float}
\usepackage{setspace}
\usepackage{lineno}
\floatstyle{plaintop}
\restylefloat{table}

\newcommand*\patchAmsMathEnvironmentForLineno[1]{%
  \expandafter\let\csname old#1\expandafter\endcsname\csname #1\endcsname
  \expandafter\let\csname oldend#1\expandafter\endcsname\csname end#1\endcsname
  \renewenvironment{#1}%
     {\linenomath\csname old#1\endcsname}%
     {\csname oldend#1\endcsname\endlinenomath}}%
\newcommand*\patchBothAmsMathEnvironmentsForLineno[1]{%
  \patchAmsMathEnvironmentForLineno{#1}%
  \patchAmsMathEnvironmentForLineno{#1*}}%
\AtBeginDocument{%
\patchBothAmsMathEnvironmentsForLineno{equation}%
\patchBothAmsMathEnvironmentsForLineno{align}%
\patchBothAmsMathEnvironmentsForLineno{flalign}%
\patchBothAmsMathEnvironmentsForLineno{alignat}%
\patchBothAmsMathEnvironmentsForLineno{gather}%
\patchBothAmsMathEnvironmentsForLineno{multline}%
}

\def\b0{{\boldsymbol{0}}}

\newtheorem{thm}{Theorem}%[numberby]

%[numberby]
\newtheorem{lem}{Lemma}%[nymberby]

\journal{Pattern Recognition Letters}

\begin{document}

\begin{frontmatter}
\title{On the Convergence of the Mean Shift Algorithm in the One-Dimensional Space}

\author{Youness Aliyari Ghassabeh\corref{cor1}}
\ead{aliyari@cs.toronto.edu}

\cortext[cor1]{Corresponding author: Phone +13433334863;
}

\address{Department of Mathematics and Statistics, Queen's University, Kingston, ON, K7L 3N6}

\begin{abstract}
 The mean shift algorithm is a non-parametric and iterative technique that has been used for finding modes of an estimated probability density function. It has been successfully employed in many applications in specific areas of machine vision, pattern recognition, and image processing. Although the mean shift algorithm has been used in many applications, a rigorous proof of its convergence is still missing in the literature. In this paper we address the convergence of the mean shift algorithm in the one-dimensional space and prove that the sequence generated by the mean shift algorithm is a monotone and convergent sequence.
\end{abstract}

\begin{keyword} Mean Shift Algorithm, Mode Estimate Sequence, Monotone Sequence, Kernel Function, Convex function, Convergence.
\end{keyword}
\end{frontmatter}

%\linenumbers
\section{Introduction}
The mean shift algorithm is a simple, non-parametric, iterative method introduced by Fukunaga and Hostetler \cite{Fukunaga} for finding modes of an estimated probability density function (pdf).
Modes of an estimated pdf play an important role in many pattern recognition applications such as image segmentation \cite{segment}, classification\cite{Youness1}, feature extraction \cite{Youness}, and object tracking \cite{tracking}. The mean shift algorithm was generalized by Cheng \cite{Cheng} and became popular in the machine vision community when its potential uses for feature space analysis were studied \cite{meer}.
In recent years, the mean shift algorithm has been successfully used in many applications ranging from image segmentation \cite{imageseg} to object tracking \cite{application2}\cite{application3}, edge detection \cite{application5}\cite{application66}, information fusion \cite{application4}, and vector quantization \cite{ourpaper2}.

The mean shift algorithm shifts each data point to the weighted average of the data set and tries to find stationary points of an estimated pdf. It starts from one of the data points and iteratively improves the mode estimate. In contrast to the $k$-mean clustering approach \cite{bookpattern} (or other classification techniques \cite{conf1}\cite{conf2}), the mean shift algorithm does not require any prior knowledge of the number of clusters and there is no assumption of the shape of the clusters.
It has been claimed that the mean shift procedure generates a convergent sequence \cite{meer}. But a crucial step in the proof given in \cite{meer} for the convergence of the mode estimate sequence is not correct. Authors in \cite{meer} claimed that the generated sequence is a Cauchy sequence, which is not true in general.
In another work \cite{per}, it was shown that the mean shift algorithm is an expectation maximization (EM) algorithm and hence the generated sequence converges to the modes of the estimated pdf. However, the EM algorithm may not converge (e.g., a counterexample is given in \cite{emnot}), in which case, the convergence of the mean shift algorithm does not follow.
On the positive side, the authors in \cite{finite}\cite{ML} assumed that the number of the stationary points of the estimated pdf inside the convex hull of the data set is finite (or equivalently the stationary points inside the convex hull of the data set are isolated) and using this assumption they proved the convergence of the iterative algorithm\cite{finite}. However, the authors in \cite{finite}\cite{ML} could not justify their assumption and the finiteness of the stationary points of the estimated pdf with the widely used kernels (e.g. Gaussian kernel) has not been shown. For the one-dimensional case, Shieh \emph{et al.} \cite{one_dim} tried to find sufficient conditions to avoid premature convergence of the sequence, but they did not show that the sequence converges to a mode of the estimated pdf.

The authors in \cite{PR} showed the convergence of the mode estimate sequence in the one-dimensional space when the MS algorithm uses the Gaussian kernel. In this paper, we investigate the convergence of the mean shift algorithm for a wide class of kernels (not necessarily the Gaussian kernel) and prove the convergence of the mean shift algorithm in the one dimension with a convex and non-increasing kernel. In contrast to \cite{finite} we do not put any constraint on the number of the stationary points of the estimated pdf. The organization of the paper is as follows: in the next section we give a brief review of the mean shift algorithm. The convergence proof for the one-dimensional mean shift algorithm is given in Section~\ref{section3}. Section~\ref{section4} is devoted to the simulations to confirm the theoretical results given in Section \ref{section3}. Finally, the concluding remarks are given in Section~\ref{section5}.

\section{Mean shift algorithm}\label{section2}
Let $x_{i}\in \mathbb{R}, i=1,\ldots,n$\footnote{Since the main contribution of this paper is showing the convergence of the mean shift algorithm in the one-dimensional case, we assumed that the data points are sampled uniformly from an unknown one-dimensional pdf.} be a sequence of $n$ independent and identically distributed (iid) random variables.
 Let $K$ denote a radially symmetric kernel function defined by $K(x)=c_{k,D}k(x^2)$, where $c_{k,D}$ is a normalization factor and $k:[0,\infty)\rightarrow [0,\infty)$ is the differentiable profile of the kernel. The kernel function $K$ is a non-negative, real valued and integrable function satisfying $\int_{\mathbb{R}}K(x)dx=1$. The profile of the kernel is assumed to be a non-negative, non-increasing and piecewise continuous function that satisfies $\int_{\mathbb{R}}k(x^2)dx<\infty$.
Two widely used profile functions are given by $k_{E}= 1-x \text{ if } 0\le x \le 1, \text{ otherwise } k_{E}= 0$ and $k_{N}(x)=\exp(-1/2x)$. The estimated pdf using the profile $k$ and the bandwidth $h$ is given by \cite{silverman}
\begin{equation}\label{one}
\hat{f}_{h,k}(x)=\frac{c_{k,D}}{nh}\sum_{i=1}^{n}k\big((\frac{x-x_{i}}{h})^2\big).
\end{equation}
\noindent Taking the derivative of (\ref{one}) and equating it to zero reveals that if $x^*$ is a mode of the estimated pdf, then it satisfies the following equality
\begin{equation}
x^*=\frac{\sum_{i=1}^nx_{i}g\big((\frac{x^*-x_{i}}{h})^2\big)}
{\sum_{i=1}^ng\big((\frac{x^*-x_{i}}{h})^2\big)},
\end{equation}
where $g(x)=-k'(x)$. Hence, the modes of the estimated pdf are fixed points of ($2$). The mean shift at point $x$ is defined by
\begin{equation}\label{fixed}
m_{h,g}(x)=\frac{\sum_{i=1}^nx_{i}g\big((\frac{x-x_{i}}{h})^2\big)}
{\sum_{i=1}^ng\big((\frac{x-x_{i}}{h})^2\big)}-x,
\end{equation}
where the scalar $m_{h,g}(x)$ is called mean shift scalar \cite{meer}. The mean shift algorithm  generates the mode estimate sequence $\{y_{j}\}_{j=1,2,\ldots}$ in order to estimate $x^*$ in ($2$), where $x^*$ is a mode of the estimated pdf. %Members of the mode estimate sequence are defined iteratively by
%\begin{equation}\label{modeestimate}
%y_{j+1}=\frac{\sum_{i=1}^nx_{i}g((\frac{y_{i}-x_{i}}{h})^2)}{\sum_{i=1}^ng((\frac{y_{i}-x_{i}}{h})^2)}.
%\end{equation}
The mean shift algorithm starts from one of the data points ($y_{1}$ is initialized to one of the data points) and iteratively update this point to find modes of the estimated pdf. The mode estimate in ($j+1$)th iteration is updated by $y_{j+1}=y_{j}+m_{h,g}(y_{j})$, where $m_{h,g}(y_{j})$ is computed using ($3$). The mode update in ($j+1$)th iteration can be simplified to
\begin{align}\label{modeestimate}
y_{j+1}&=y_{j}+m_{h,g}(y_{j})\nonumber\\
&=y_{j}+\frac{\sum_{i=1}^nx_{i}g\big((\frac{y_{j}-x_{i}}{h})^2\big)}
{\sum_{i=1}^ng\big((\frac{y_{j}-x_{i}}{h})^2\big)}-y_{j}\nonumber\\
&=\frac{\sum_{i=1}^nx_{i}g((\frac{y_{i}-x_{i}}{h})^2)}{\sum_{i=1}^ng((\frac{y_{i}-x_{i}}{h})^2)}.
\end{align}
The estimated mode update is iterated until the Euclidean distance between two consecutive mode estimates becomes less than some predefined epsilon.

\noindent The mean shift algorithm is an instance of the gradient ascent algorithm with an adaptive step size \cite{bound}, and in each iteration it tries to improve the previous estimation. The algorithm is applied to all data points, and it is expected to converge to the stationary points of the estimated pdf. Finally, the stationary points are pruned by retaining only the local maxima of the estimated pdf \cite{meer}.  It can be shown that the Euclidean distance between two consecutive mode estimates converges to zero as the number of iteration goes to infinity, i.e., $\lim_{j\rightarrow \infty} (y_{j+1}-y_{j})^2=0$ \cite{meer}\cite{finite}. However, this property does not imply the convergence of the mode estimate sequence $\{y_{j}\}$, and the convergence of the sequence needs to be proved separately. In the next section we prove the convergence of the mode estimate sequence $\{y_{j}\}$ in the one-dimensional case.

\section{Convergence of the mean shift algorithm in one dimension}\label{section3}

We prove the following theorem
\begin{thm}

Let $X=\{x_{1}, x_{2},\ldots, x_{n}\}$  denote the input data. Let $\hat{f}_{h,k}(x)$ denote the estimated pdf using a kernel $K$ with a convex, differentiable, and strictly decreasing profile $k$ and the bandwidth $h$. Suppose that $g(x)=-k'(x)$ is a strictly decreasing function, then the mode estimate sequence generated by the mean shift algorithm converges.
\end{thm}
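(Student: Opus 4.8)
The plan is to prove convergence by means of the Monotone Convergence Theorem: I will show that the mode–estimate sequence $\{y_{j}\}$ is bounded and monotone, from which convergence is immediate. Boundedness is the easy half. Writing the weights $w_{i}(y)=g\big((\tfrac{y-x_{i}}{h})^2\big)$, which are strictly positive because $g=-k'>0$ (as $k$ is strictly decreasing), the update (\ref{modeestimate}) displays $y_{j+1}=\sum_{i}x_{i}w_{i}(y_{j})/\sum_{i}w_{i}(y_{j})$ as a convex combination of the data. Hence $y_{j+1}\in[\min_{i}x_{i},\max_{i}x_{i}]$ for every $j$, so the whole sequence lies in the convex hull of $X$ and is bounded.

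The bookkeeping device that drives everything is the identity linking the mean shift vector to the gradient of the estimated density. Differentiating (\ref{one}) and using $g=-k'$ gives $\hat{f}_{h,k}'(y)=C(y)\,m_{h,g}(y)$, where $C(y)=\frac{2c_{k,D}}{nh^{3}}\sum_{i}w_{i}(y)>0$; equivalently, the step $y_{j+1}-y_{j}=m_{h,g}(y_{j})$ always has the same sign as $\hat{f}_{h,k}'(y_{j})$, and the iterate is stationary exactly at a critical point of $\hat{f}_{h,k}$. Alongside this I would record the ascent inequality $\hat{f}_{h,k}(y_{j+1})\ge \hat{f}_{h,k}(y_{j})$: convexity of $k$ gives $k(a)-k(b)\ge k'(b)(a-b)$ termwise, and the resulting lower bound telescopes (using $\sum_{i}x_{i}w_{i}(y_{j})=y_{j+1}\sum_{i}w_{i}(y_{j})$) into $\frac{c_{k,D}}{nh^{3}}\sum_{i}w_{i}(y_{j})(y_{j+1}-y_{j})^{2}\ge 0$, with equality only when $y_{j+1}=y_{j}$. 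This confirms that convexity is the hypothesis powering the ascent behaviour, but the genuine work lies in the monotonicity of $\{y_{j}\}$ itself.

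The heart of the argument, and the step I expect to be the main obstacle, is that monotonicity. Assume without loss of generality that the first step moves right, $y_{2}>y_{1}$, so $\hat{f}_{h,k}'(y_{1})>0$, and let $x^{*}$ be the smallest critical point of $\hat{f}_{h,k}$ with $x^{*}>y_{1}$, so that $\hat{f}_{h,k}'>0$ on $[y_{1},x^{*})$. I would prove a \emph{no-overshoot} lemma: if $y_{j}<x^{*}$ then $y_{j+1}\le x^{*}$. Using the balance (fixed-point) relation $\sum_{i}(x_{i}-x^{*})w_{i}(x^{*})=0$, this reduces exactly to the inequality $\sum_{i}(x_{i}-x^{*})w_{i}(y_{j})\le 0$, i.e.\ to showing that tilting the balanced sum from $x^{*}$ to the left endpoint $y_{j}$ can only make it nonpositive. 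Granting the lemma, $\{y_{j}\}$ is increasing and bounded above by $x^{*}$, hence convergent; since $y_{j+1}-y_{j}\to 0$ forces the limit to be a fixed point and $x^{*}$ is the only critical point in $[y_{1},x^{*}]$, the limit is $x^{*}$. The symmetric case $y_{2}<y_{1}$ gives a decreasing sequence.

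The delicate point is precisely the weighted comparison inside the no-overshoot lemma. For $y_{j}<x^{*}$ the strict monotonicity of $g$ makes $w_{i}(y_{j})<w_{i}(x^{*})$ for every datum with $x_{i}>x^{*}$ and for those with $x_{i}$ in $((x^{*}+y_{j})/2,\,x^{*})$, while $w_{i}(y_{j})>w_{i}(x^{*})$ for data far to the left; a naive term-by-term estimate fails exactly on the data lying between $y_{j}$ and $x^{*}$, whose weights move the ``wrong'' way. I would therefore isolate this as a lemma and prove it by a monotone-rearrangement estimate on the ratios $r_{i}=w_{i}(y_{j})/w_{i}(x^{*})$: these ratios are monotone in $x_{i}$ and are aligned with the signed distances $(x_{i}-x^{*})$, so a Chebyshev-type summation inequality, anchored on the balance condition, yields $\sum_{i}(x_{i}-x^{*})w_{i}(y_{j})\le 0$. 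Controlling this alignment from the strict decrease of $g$ is the crux on which the entire convergence proof rests.
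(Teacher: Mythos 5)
Your boundedness argument, the sign identity $\operatorname{sgn}(y_{j+1}-y_j)=\operatorname{sgn}(\hat f_{h,k}'(y_j))$, and the ascent inequality are all correct, and you have correctly located the crux in the monotonicity of $\{y_j\}$ itself. But the no-overshoot lemma on which your whole argument rests is not established, and as stated it is false. First, the proposed proof: the ratios $r_i=w_i(y_j)/w_i(x^*)=g\bigl((y_j-x_i)^2/h^2\bigr)/g\bigl((x^*-x_i)^2/h^2\bigr)$ are \emph{not} monotone in $x_i$ for a general admissible profile. Take $h=1$ and $g(s)=(1+s)^{-2}$ (so $k(s)=(1+s)^{-1}$ is convex, differentiable, strictly decreasing, and $g$ is strictly decreasing); then
\begin{equation*}
\frac{d}{dt}\,\frac{1+(x^*-t)^2}{1+(y_j-t)^2}=\frac{2(x^*-y_j)\bigl[(x^*-t)(y_j-t)-1\bigr]}{\bigl(1+(y_j-t)^2\bigr)^2},
\end{equation*}
which changes sign: the ratio decreases for $t$ between $y_j$ and $x^*$ but increases for $t$ far on either side. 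So the Chebyshev/rearrangement step has nothing to bite on, and the problematic data points in $\bigl((y_j+x^*)/2,\,x^*\bigr)$, whose weights move the ``wrong'' way, are not controlled. Second, the lemma itself fails in the generality in which you use it: with the same profile, two data points at $0$ and $a=10$, and $y\to-\infty$, one computes $m(y)=10\,(1+y^2)^2/\bigl[(1+(y-10)^2)^2+(1+y^2)^2\bigr]\to 5$, so the update jumps from far left past the mode near $0$ (the smallest critical point above $y$, located near $10/10201$) and lands where $\hat f_{h,k}'<0$, after which the next step moves back left. So ``$y_j<x^*$ implies $y_{j+1}\le x^*$'' cannot be true for arbitrary $y_j$; any correct version must exploit where the iterates actually live, and your proof supplies no such localization. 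Monotonicity of $\{y_j\}$ from the very first step is simply too strong a claim for this class of kernels.

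This is where the paper takes a genuinely different, and weaker but sufficient, route: it proves only \emph{eventual} monotonicity. Using $|y_{j+1}-y_j|\to 0$ and $\hat f_{h,k}'(y_j)\to 0$, it arranges (when no data point is a critical point) that for all large $j$ the iterate $y_j$ stays outside small neighborhoods of the $x_i$ while consecutive iterates are closer together than those neighborhoods' radius, so that \emph{no data point lies strictly between $y_j$ and $y_{j+1}$}. Under that condition a direct comparison--splitting the data into points left of $y_j$ and right of $y_{j+1}$ and using only that $g$ is decreasing--shows the step direction cannot reverse: $y_{j+1}\ge y_j$ forces $y_{j+2}\ge y_{j+1}$. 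No global no-overshoot property is needed, and boundedness plus eventual monotonicity already gives convergence. (The paper then handles separately, via a mean-value-theorem analysis of the fixed-point map, the case where some $x_i$ is itself a critical point.) If you want to salvage your architecture, you would need to replace the no-overshoot lemma with an argument of this ``no data point between consecutive iterates'' type, or restrict to kernels (such as the Gaussian) for which the ratio $w_i(y)/w_i(x^*)$ genuinely is monotone in $x_i$.
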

\begin{proof}
Since the mode estimate sequence is bounded, it suffices to show that it is a monotone sequence. We prove if for all $x_{i}\in X$, $\hat{f}_{h,k}^{'}(x_{i})\neq 0$ and $\hat{f}_{h,k}^{'}(x)$ is a continuous function, then there exits $N$ such that for all $j>N$, the mode estimate sequence $\{y_{j}\}$ will be a monotone sequence. If for some $x_{i} \in X$, $\hat{f}_{h,k}^{'}(x_{i})=0$, then either the mode estimate sequence converges to those $x_{i}$'s or there exists a large enough $N$ such that for all $j>N$ the mode estimate sequence $\{y_{j}\}$ will be a monotone sequence.
The following inequality was proved in \cite{meer}

\begin{equation*}
\hat{f}_{h,k}(y_{j+1})-\hat{f}_{h,k}(y_{j})\geq \frac{c_{k}}{nh^{2}}\|y_{j+1}-y_{j}\|^2\sum_{i=1}^n
g\Big(\|\frac{y_{j}-x_{i}}{h}\|^2\Big),
\end{equation*}

\noindent where $g(x)=-k'(x)$ and $c_{k}$ is the normalization factor.
Let $M(j)=\min\{g(\|\frac{y_{j}-x_{i}}{h}\|^2) ,i=1, \dots, n\}$. We have $M(j)\geq g(\frac{d^2}{h^2})$, where $d$ denotes the supremum of the pairwise distances between elements of $X$, i.e., $d=\sup\{ |x_{i}-x_{j}|, i,j=1,\ldots,n, i\ne j\}$. Let $\varphi=g(\frac{d^2}{h^2})$.  Hence, the above inequality can be simplified as follows
\begin{align*}
\hat{f}_{h,k}(y_{j+1})-\hat{f}_{h,k}(y_{j})&\geq \frac{c_{k}}{nh^{2}}(y_{j+1}-y_{j})^2\sum_{i=1}^n
g\Big((\frac{y_{j}-x_{i}}{h})^2\Big) \nonumber\\
&\geq \frac{c_{k}}{nh^{2}}(y_{j+1}-y_{j})^2 n M(j)  \nonumber\\
&\geq\frac{c_{k}}{h^{2}}(y_{j+1}-y_{j})^2  \varphi.
\end{align*}
 Therefore, we have
\begin{align*}
\Big(\hat{f}_{h,k}(y_{j+1})-\hat{f}_{h,k}(y_{j}) \Big)\frac{h^{2}}{\varphi c_{k}} \geq (y_{j+1}-y_{j})^2 \geq 0.
\end{align*}
Since $\hat{f}_{h,k}(y_{j+1})$ is a convergent sequence \cite{meer}, the limit of the left side of the above inequality as $j \rightarrow \infty$ is zero. Therefore, the following limit relation holds
\begin{align}\label{consequtive difference}
\lim_{j\rightarrow \infty} |y_{j+1}-y_{j}| =0.
\end{align}
The following equality also was proved in \cite{meer}
\begin{align}\label{deri}
\lim_{j\rightarrow \infty}\hat{f}^{'}_{h,k}(y_{j})=0.
\end{align}
 Now we consider the case that $\hat{f}^{'}_{h,k}(x_{i})\neq 0, \forall x_{i} \in X$.
  For all $x_{i}\in X$, $\hat{f}^{'}_{h,k}(x_{i})\neq 0$, as a result of which there exists  $\epsilon_{i}>0$ such that $\hat{f}^{'}_{K}(x)$ is nonzero in the closed interval centered at $x_{i}$ with radius $\epsilon_{i}$, denoted by $I[x_{i},\epsilon_{i}], i=1,\ldots, n$. Let
  $\epsilon=\min\{ \epsilon_{i} ,\; i=1,\ldots,n\}$.
  Since $\hat{f}^{'}_{h,k}(x)$ is continuous, it achieves its minimum over the compact set  $\bigcup_{i=1}^n I[x_{i},\epsilon]$, so let $c=\min_{x\in\bigcup_{i=1}^n I[x_{i},\epsilon]}\hat{f}^{'}_{h,k}(x)$. By assumption, it is clear that $c>0$. From (\ref{consequtive difference}) the sequence $\{|y_{j+1}-y_{j}|\}_{j=1,2,\ldots}$ converges to zero. Therefore, for every $\epsilon/2>0$, there exists a constant $N_{1}(\epsilon/2)>0$  such that for all $j$ greater than $N_{1}(\epsilon/2)$, the difference between two consecutive mode estimates becomes less than $\epsilon/2$, i.e., $|y_{j+1}-y_{j}|<\epsilon/2,\forall j>N_{1}(\epsilon/2)$\footnote{The upper bound $N_{1}(\epsilon/2)$ for $|y_{j+1}-y_{j}|$ comes from the the convergence of the sequence $\{|y_{j+1}-y_{j}|\}_{j=1,2,\ldots}$ to zero. By definition, if a sequence $\{a_{j}\}_{j=1,2,\ldots}$ converges to zero, then for every $\epsilon>0$ there exist a constant $N(\epsilon)$ such that $|a_{j}|<\epsilon$ for all $j>N(\epsilon)$.}. Furthermore, there exists $N_{2}$  such that for all $j$ greater than $N_{2}$ the estimated derivative function along the mode estimates becomes less than $c$, i.e.,  $\hat{f}^{'}_{h,k}(y_{j})<c,\forall j>N_{2}$. Let $N=\max\{N_{1}(\epsilon/2),N_{2}\}$. Then, we have
\begin{align}
\forall  j>N: y_{j} \not \in \bigcup_{i=1}^n I[x_{i},\epsilon], \; y_{j}-\epsilon/2<y_{j+1}<y_{j}+\epsilon/2.
\end{align}
 Let $j>N$ and, without loss of generality, assume $y_{j+1}\ge y_{j}$. We show that $y_{j+2}\ge y_{j+1}$, and hence for $j>N$ the mode estimate sequence will be a non-decreasing sequence. We define sets $D_{1}$, $D_{2}$, and $D_{3}$ as follows
 \begin{align*}
 D_{1}=\{x_{i}: y_{j}>x_{i}\}, \; D_{2}=\{x_{i}: y_{j+1}>x_{i}>y_{j}\},\; D_{3}=\{x_{i}: x_{i}>y_{j+1}\}.
 \end{align*}
 Since $g$ is a strictly decreasing function, then the following inequality holds
 \begin{align}\label{inequality for one set}
 \sum_{x_{i}\in D_{3}}(x_{i}-y_{j+1})g\big(|x_{i}-y_{j}|^2\big)\le \sum_{x_{i}\in D_{3}}(x_{i}-y_{j+1})g\big(|x_{i}-y_{j+1}|^2\big).
 \end{align}
 Using (\ref{modeestimate}), we obtain
 \begin{align} \label{replacing}
  \sum_{x_{i}\in D_{3}}(x_{i}-y_{j+1})g\big(|x_{i}-y_{j}|^2\big)= \sum_{x_{i}\in D_{1}\cup D_{2}}(y_{j+1}-x_{i})g\big(|x_{i}-y_{j}|^2\big).
  \end{align}
  Replacing the left side of (\ref{inequality for one set})  with the right side of (\ref{replacing}), we get
  \begin{align} \label{replacing2}
  \sum_{x_{i}\in D_{1}\cup D_{2}}(y_{j+1}-x_{i})g\big(|x_{i}-y_{j}|^2\big) \le\sum_{x_{i}\in D_{3}}(x_{i}-y_{j+1})g\big(|x_{i}-y_{j+1}|^2\big).
  \end{align}
Adding  $\sum_{x_{i} \in D_{1}\cup D_{2}}(x_{i}-y_{j+1})g(|x_{i}-y_{j+1}|^2)$  to both sides of equation (\ref{replacing2}), gives
  \begin{align}\label{last1}
&\sum_{x_{i} \in D_{1}\cup D_{2}}(y_{j+1}-x_{i})g(|x_{i}-y_{j}|^2)+\sum_{x_{i} \in D_{1}\cup D_{2}}(x_{i}-y_{j+1})g(|x_{i}-y_{j+1}|^2)  \\
&\le \sum_{x_{i} \in D_{3}}(x_{i}-y_{j+1})g(|x_{i}-y_{j+1}|^2)+\sum_{x \in D_{1}\cup D_{2}}(x_{i}-y_{j+1})g(|x_{i}-y_{j+1}|^2). \nonumber
\end{align}
From the properties given in (\ref{consequtive difference}) and (\ref{deri}), we observe that $D_{2}$ is an empty set. Therefore, the left side of the above inequality can be simplified to
\begin{align*}
&\sum_{x_{i} \in D_{1}\cup D_{2}}(y_{j+1}-x_{i})g(|x_{i}-y_{j}|^2)+\sum_{x_{i} \in D_{1}\cup D_{2}}(x_{i}-y_{j+1})(|x_{i}-y_{j+1}|^2) \\
&=\sum_{x_{i} \in D_{1}}(y_{j+1}-x_{i})\Big(g(|x_{i}-y_{j}|^2)-g(|x_{i}-y_{j+1}|^2)\Big) \ge 0.
\end{align*}
Hence, the right side of (\ref{last1}) is nonnegative and we have
\begin{align*}
0\le \sum_{x_{i} \in D_{3}\cup D_{2} \cup D_{1}}(x_{i}-y_{j+1})g(|x_{i}-y_{j+1}|^2).
\end{align*}
This is equivalent to $y_{j+2}\ge y_{j+1}$. Therefore, for all $j>N$ if $y_{j+1}\ge y_{j}$, then $y_{j+2}>y_{j+1}$. By induction for all $j>N$ the sequence $\{y_{j}\}$ will be  monotonically increasing and hence  convergent.\\
For the case that $y_{j+1}\le y_{j}$, we define sets $D_{1}$, $D_{2}$, and $D_{3}$ as follows
\begin{align*}
D_{1}=\{x_{i}|x_{i}<y_{j+1}\}, \; D_{2}=\{x_{i}|y_{j+1}<x_{i}<y_{j}\}, \; D_{3}=\{x_{i}|x_{i}>y_{j}\}.
\end{align*}
Then similar to the previous case, it is straightforward to show that $y_{j+2}\le y_{j+1}$. Therefore, the mode estimate sequence $\{y_{j}\}$ for all $j>N$ becomes a monotonically decreasing and convergent sequence.\\

It remains to prove the monotonicity of the mode estimate sequence for the case that for some $x_{i} \in X$, $\hat{f}_{h,k}^{'}(x_{i})=0$.
Let $\hat{f}^{'}_{h,k}(x_{i}^*)=0$ for some $x_{i}^*\in X$.
If there exists $N$, such that for all $j>N$, there is not any $x_{i}^*$ between $y_{j}$ and $y_{j+1}$, then the previous results can be applied to show that the mode estimate sequence is a monotone sequence. Otherwise, we assume that such $N$ does not exist.
We need the following lemma
\begin{lem}\label{lema}
Consider a fixed point iteration defined by $y_{j+1}=m(y_{j})$, where $m$ is a differentiable function. Let $x^*$ denote a solution of the fixed point problem, i.e.,  $x^*=m(x^*)$ and let $e_{j}$ denote the distance between the fixed point $x^*$ and $y_{j}$, i.e., $e_{j}=|x^*-y_{j}|$, respectively. Then there exists $ \delta$ such that $e_{j+1}=e_{j}|m^{'}(\delta)|$ and $y_{j}<\delta<x^*$ if $y_{j}<x^*$ and  $x^*<\delta<y_{j}$ if $x^*<y_{j}$.
\end{lem}
\begin{proof}
Using the mean value theorem, there exists $ \delta$ such that $y_{j}<\delta<x^*$ (without loss of generality assume $y_{j}<x^*$) and $m(x^*)-m(y_{j})=(x^*-y_{j})m'(\delta)$. Then, we have
\begin{align*}
e_{j+1}=|x^*-y_{j+1}|&=|m(x^*)-m(y_{j})|\\&=|(x^*-y_{j})m'(\delta)|\\
&=|(x^*-y_{j})| |m'(\delta)|\\
&=e_{j}|m'(\delta)|.
\end{align*}
That shows $e_{j+1}=e_{j}|m'(\delta)|$.
\end{proof}

%\begin{equation}
%x^*-y_{j+1}=m(x^*)-m(y_{j}).
%\end{equation}
Using lemma $1$, there are three possibilities for $m'(x^*)$ that we check separately:
\begin{enumerate}
\item If  $|m'(x^*)|<1$, then there exists an interval $I=[x^*-\epsilon,x^*+\epsilon]$  such that for all $x \in I$, $|m'(x)|<1$. Hence, if the sequence $\{y_{j}\}$ falls in $I$, then it converges to $x^*$ using lemma (\ref{lema})(since $e_{j}$ becomes a decreasing sequence and finally converges to zero). If the sequence $\{y_{j}\}$ never falls in this interval, then there exists $N$ large enough such that for all $j>N$, $x^*$ is not between $y_{j}$ and $y_{j+1}$, which contradicts the assumption we have made about non-existence of such $N$.
\item If $|m'(x^*)| > 1$, then there is a closed interval $I=[x^*-\epsilon,x^*+\epsilon]$ such that  for all $ x\in I$, we have $|m'(x)|> 1$. For some $j$, let the sequence $y_j$  fall in $I$. Otherwise, we can find large enough $N$ such that for all $j>N$ there is no $x_{i}^*$ between $y_{j}$ and $y_{j+1}$, which contradicts our assumption for non-existence of such $N$. We choose $j$ large enough such that $|y_{j+1}-y_{j}|<\epsilon/2$. There are four possibilities as follows
\begin{enumerate}
\item $x^*-\epsilon \le y_{j} < x^*-\frac{\epsilon}{2},$
\item $x^*-\frac{\epsilon}{2} \le y_{j} < x^*,$
\item $x^* \le y_{j} < x^*+\frac{\epsilon}{2},$
\item $x^*+\frac{\epsilon}{2} \le y_{j} < x^*+\epsilon.$
\end{enumerate}
Let $x^*-\epsilon \le y_{j} < x^*-\frac{\epsilon}{2}$. It is clear that in this case $e_{k+1}> e_{k}$, since  for all $x \in I$,  $m'(x)> 1$. It means that the Euclidean distance between $y_{j+1}$ and $x^*$ is greater than the Euclidean distance between $y_{j}$ and $x^*$ ($y_{j+1}$ is also on the left side of the $x^*$ because it is assumed that $|y_{j+1}-y_{j}|<\epsilon/2$). Therefore, in this case the sequence $y_{j}$ can never fall in the interval $I'=[x^*-\frac{\epsilon}{2},x^*+\frac{\epsilon}{2}]$. Hence, for all $j>N$, there is no $x_{i}^*$ between $y_{j}$ and $y_{j+1}$, which contradicts our assumption about the non-existence of such $N$.(Case~$4$ can be treated exactly in a same).

Let $x^*-\frac{\epsilon}{2} \le y_{j} < x^*$. Also  for all $x \in I$,  $m'(x)> 1$. It is obvious that the Euclidean distance between $y_{j+1}$ and $x^*$ is greater than the Euclidean distance between $y_{j}$ and $x^*$ ($y_{j+1}$ can be  in the left or right side of the $x^*$). In this case, after some finite iterations (Let us say $M$ iterations), the cases $1$ or $4$ will happen and then it can be concluded for all $j>N+M$, the sequence $y_{j} \not \in I'=[x^*-\frac{\epsilon}{2},x^*+\frac{\epsilon}{2}]$, which contradicts our assumption about non-existence of such $N$.
The third case can be treated similar to the second case.

\item If $|m'(x^*)|=1$, then there are three possibilities as follows:
 \begin{enumerate}
 \item $\exists I$ around $x^*$ such that $\forall x \in I$, $m'(x)>1$. This case was discussed before.
 \item $\exists I$ around $x^*$ such that $\forall x \in I$, $m'(x)<1$. This case was discussed before.
 \item $\exists I$ around $x^*$ such that $\forall x \in I$ and $x<x^*$, $m'(x)<1$. Also,  $\forall x \in I$ and $x>x^*$, $m'(x)>1$. In this case, the mode estimate sequence either converges to $x^*$ or there is a closed interval $I'$ around $x^*$ such that $y_{j}$ never falls in that interval. Convergence of the later case is guaranteed according to the above discussion.
\end{enumerate}
\end{enumerate}

This completes the convergence proof of the sequence in the one dimension.

\end{proof}

\textbf{Remarks}
\begin{enumerate}[(a)]
\item The authors in \cite{meer} proved that if a kernel $K$ has a convex, differentiable, and monotonically decreasing profile $k$, then the estimated pdf using the kernel $K$ and the bandwidth $h$ along the mean shift sequence is monotonically increasing and convergent. In other words, they proved the monotonicity and convergence of $\{\hat{f}_{h,K}(y_{j})\}_{j=1,2,\ldots}$, that $\{y_{j}\}_{j=1,2,\ldots}$ is the sequence generated by the mean shift algorithm. It is obvious that the convergence of $\{\hat{f}_{h,K}(y_{j})\}$ does not imply the convergence of the mode estimate sequence $\{y_{j}\}$. The authors assumed that the sequence $\{y_{j}\}$ generated by the mean shift sequence  is a Cauchy sequence, which is not true in general. Hence the proof given in \cite{meer} for the convergence of the mean shift sequence is not correct \cite{ourpaper}.
\item The authors in \cite{finite} assumed that the number of the modes of an estimated pdf is finite and, based on this assumption, they showed that the mean shift sequence $\{y_{j}\}$ converges, but they could not justify their assumption.
    Showing the finiteness of the number of the modes of an estimated pdf is still an open problem and there is not any useful condition to guarantee the finiteness of the number of stationary points of an estimated pdf.
\item Carreira-Perpi\~{n}\'an showed that the mean shift algorithm with the Gaussian kernel is an EM algorithm \cite{per} and therefore the generated sequence $\{y_{j}\}$ converges to the modes of the estimated pdf. A counterexample for the convergence of the EM algorithm is given in \cite{emnot}, which shows in general the EM algorithm may not converge.
\item So far, only the convergence of $\hat{f}_{h,K}(y_{j})$ is proved in the literature. Theorem $1$ provides sufficient conditions to guarantee convergence of the mean shift sequence in a one-dimensional space. It assures that under certain conditions the mode estimate sequence generated by the mean shift algorithm is a monotone and convergent sequence in the one-dimensional space. The convergence of the mean shift sequence for higher dimensions (when dimensionality of input data is greater than one) has not yet been proved.

\end{enumerate}
\section{Simulation Results}\label{section4}
%Figure $(1)$ shows these two distributions and it is clear that there are two modes located at +$3$ and -$3$.
We carried out a series of simulations to demonstrate the results of Theorem~$1$. We assumed that the input data are generated by one of two normal distributions with the mean values +$3$ and -$3$ and a variance of $1$, i.e., $x\sim \mathcal{N}(3,1)$ or $x\sim \mathcal{N}(-3,1)$.
The total number of the observed data is $1000$, such that $500$ samples are generated by the first normal distribution and the rest of the samples are generated by the second normal distribution. For the mean shift algorithm we used the Gaussian kernel that satisfies conditions given in Theorem~$1$. The bandwidth $h$ is fixed to $1$, and we stop the mean shift iterations if the distance between two consecutive mode estimates becomes less that $0.0005$. Figure~$1$ shows the convergence of the mean shift algorithm for ten different initializations. In each case, the mean shift algorithm generates a monotone sequence converging to one of the two available modes. For example, in the top left of Figure~$1$, the mean shift algorithm starts from the point $6.045$ and iteratively update the estimated mode. As is expected, the algorithm generates a decreasing sequence that converges to $3$. In the top right of Figure~$1$, the mean shift algorithm initializes to $-6.575$ and it generates an increasing sequence that converges to $-3$. The rest of the graphes in Figure~$1$ demonstrate the convergence of the mean shift algorithm to either -$3$ or $3$. In each case, based on the initial value, the algorithm generates an increasing or decreasing sequence in order to estimate a mode of the pdf. Table~$1$ shows the values of the mode estimate sequence as a function of the number of the iterations. The starting points are $6.045$, $-6.575$, $0.905$, $-0.575$, $4.457$, $-4.759$, $0.588$, $-0.602$, $5.076$, and $-5.160$. It can be observed from Table~$1$ that as the number of iterations increase, the mode estimate sequence converges to either $-3$ or $3$.  We will get similar results when a pdf estimate has multiple modes (more than two modes) and the mean shift algorithm generates a monotone sequence that converges to a mode that is closer to its initial value.
%\begin{figure}[htp]
%\begin{center}
%\includegraphics[width=1\textwidth]{reveiw1.eps}
%\caption[ Simulation using the Mean shift algorithm ]
%{\small{The mode estimate sequence generated by the mean shift algorithm with different initial values. In the top right the mean shift algorithm starts from $0.724$ and generates a decreasing sequence that converges to $3$. In the top left the mean shift algorithm starts from $5.573$ and generates a decreasing sequence that converges  to $3$. In the bottom left, the mean shift sequence initialized to be $-5.573$ and by generating an increasing sequence tries to estimate $-3$. In the bottom right, the mean shift algorithm starts from $-0.426$ and  generates a decreasing sequence that converges to $-3$.\label{figone}}}
%\end{center}
%\end{figure}

%\textwidth

\begin{figure}[htp]
\begin{center}
\begin{tabular}{cc}
{\includegraphics[width=0.85\textwidth]{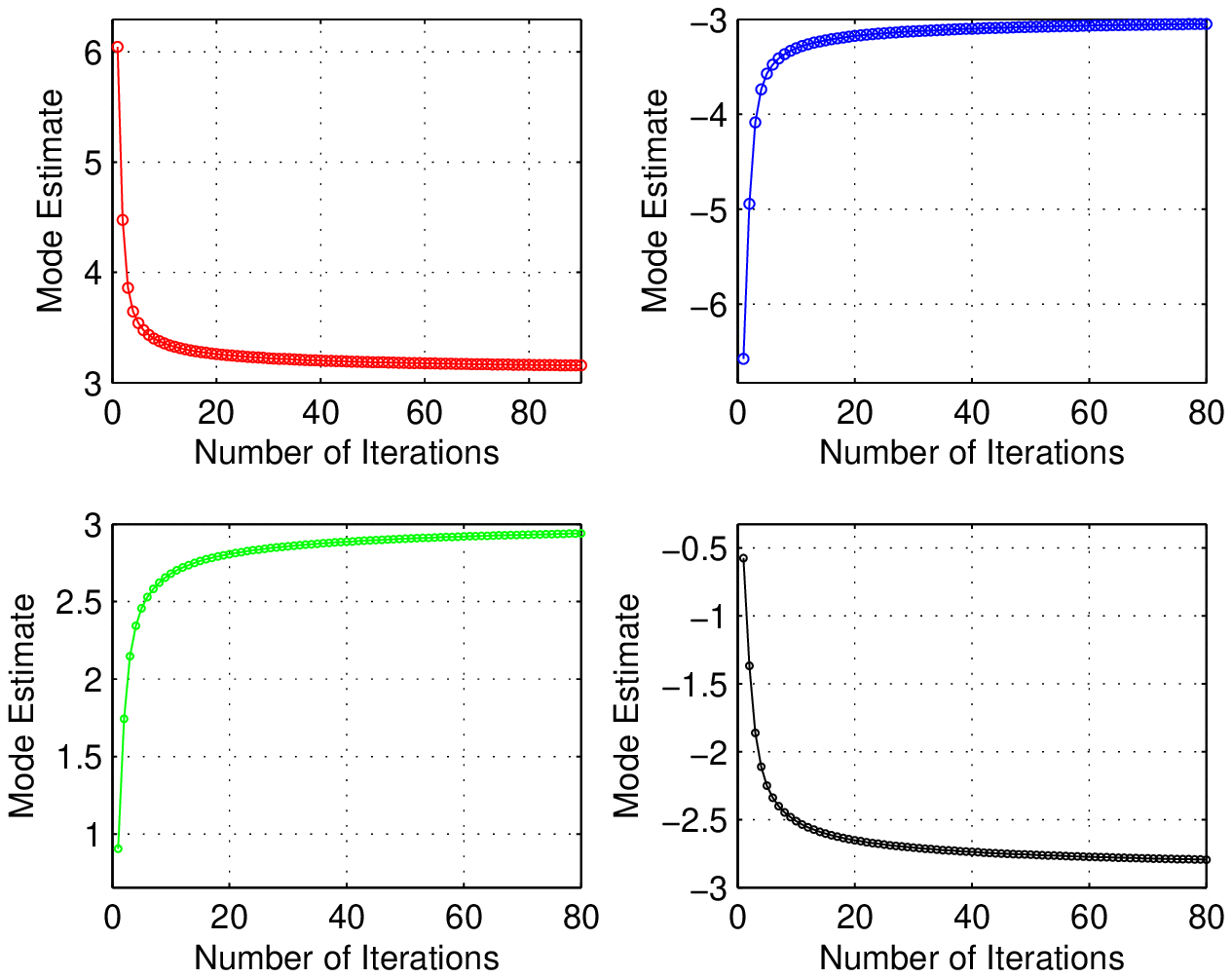}}
\\
{\includegraphics[width=0.85\textwidth]{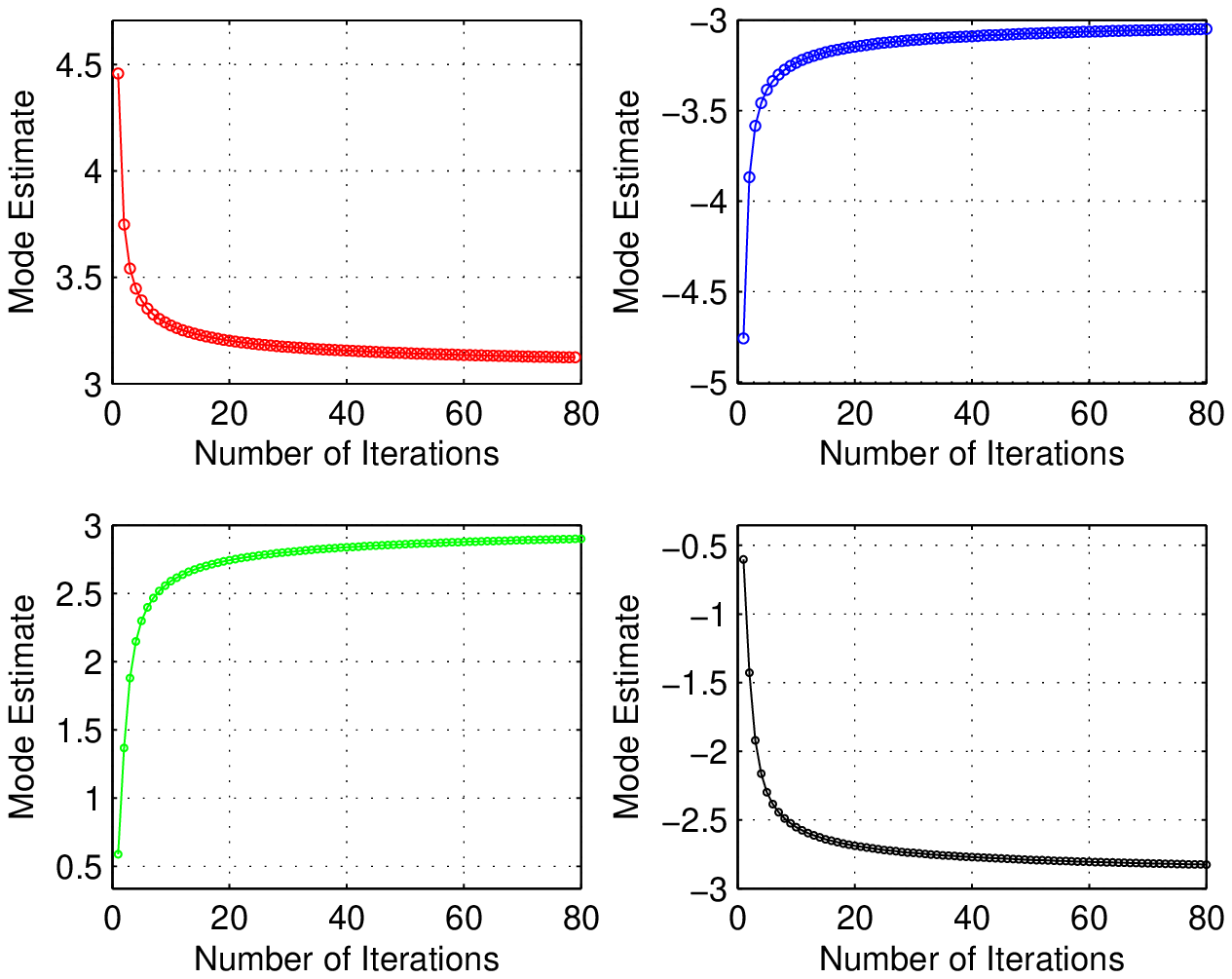}}
\\
{\includegraphics[width=0.85\textwidth]{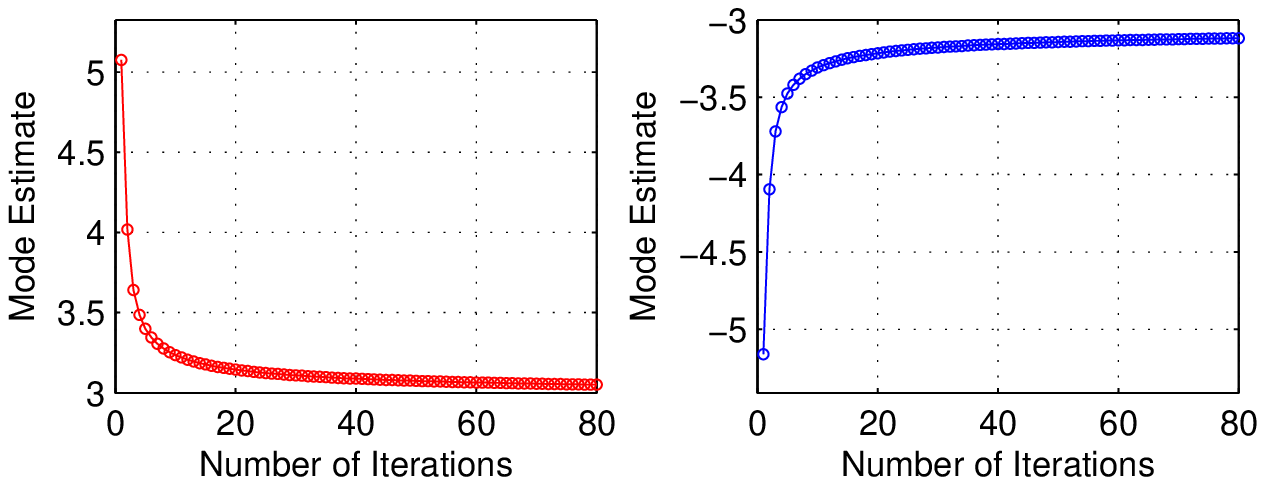}}
\\
\end{tabular}
\caption[ Simulation using the Mean shift algorithm ]
{\small{The mode estimate sequence generated by the mean shift algorithm with different initial values. The $x$-axis represnets the number of iterations and $y$-axis represents the value of the estimated mode. The bandwidth $h$ is equal to one and the iterations stop when the difference between two consecutive mode estimates becomes less than $0.0005$. In the above simulations, the mean shift algorithm was initialized to the following values (from top to bottom, and from left to right): $6.045$, $-6.575$, $0.905$, $-0.575$, $4.457$, $-4.759$, $0.588$, $-0.602$, $5.076$, and $-5.160$. It can be observed from the simulations that based on the initial value, the mean shift algorithm generates a convergent monotone sequence.\label{figone}}}
\end{center}
\end{figure}

%\floatsetup[table]{capposition=top}
%\begin{table}

%\label{table:circle quantization}
%\begin{center}
 %   \begin{tabular}{| l | l | l | l | l | l | l | l |}
  %  \hline
   % Number of iteration & 1 & 5 & 10 & 20 & 40 & 80 & 81 \\ \hline
    %Estimated mode & 0.724 & 2.388 & 2.634 & 2.766 & 2.846 & 2.899 & 2.903 \\ \hline
    %Estimated mode  & 5.572 &  3.475 & 3.298 & 3.203  & 3.145 & 3.106 & 3.105  \\ \hline
    %Estimated mode  & -5.276 &  -3.453 & -3.273 & -3.175  & -3.113 & -3.073 & -3.072  \\ \hline
    %Estimated mode  & -0.427 &  -2.147 & -2.496 & -2.669  & -2.771 & -2.886 & -2.889  \\ \hline
    %\end{tabular}
%\end{center}
\begin{table}

\label{table:circle quantization}
\begin{center}
    \begin{tabular}{| l | l | l | l | l | l | l | l |}
    \hline
    Number of iteration & 1 & 5 & 10 & 20 & 40 & 80 & 81 \\ \hline
    Estimated mode & 6.045 & 3.540 & 3.356 & 3.260 & 3.201 & 3.091 & 3.091 \\ \hline
    Estimated mode  & -6.575 &  -3.572 & -3.304 & -3.175  & -3.098 & -3.047 & -3.047  \\ \hline
    Estimated mode  & 0.905 &  2.456 & 2.680 & 2.806  & 2.887 & 2.940 & 2.941  \\ \hline
    Estimated mode  & -0.575 &  -2.250 & -2.512 & -2.652  & -2.738 & -2.795 & -2.797  \\ \hline
    Estimated mode  & 4.457 &  3.391 & 3.273 & 3.202  & 3.155 & 3.124 & 3.123  \\ \hline
    Estimated mode  & -4.759 &  -3.384 & -3.234 & -3.145  & -3.087 & -3.048 & -3.047  \\ \hline
    Estimated mode  & 0.588 &  2.299 & 2.589 & 2.743  & 2.837 & 2.900 & 2.901  \\ \hline
    Estimated mode  & -0.602 &  -2.298 & -2.553 & -2.688  & -2.771 & -2.855 & -2.856  \\ \hline
    Estimated mode  & 5.076 &  3.400 & 3.236 & 3.145  & 3.088 & 3.051 & 3.050  \\ \hline
    Estimated mode  & -5.160 &  -3.477 & -3.308 & -3.215  & -3.157 & -3.118 & -3.117  \\ \hline
    %Estimated mode  & 0.597 &  2.283 & 2.556 & 2.700  & 2.786 & 2.843 & 2.844  \\ \hline
    %Estimated mode  & -0.451 &  -2.091 & -2.552 & -2.767  & -2.888 & -2.963 & -2.964  \\ \hline

    \end{tabular}
\end{center}
\caption[Mean square distortion for the noisy circle ]{\small{The mode estimate sequence generated by the mean shift algorithm when it starts from ten different points. The bandwidth $h$ is fixed to $1$, and the number of iterations goes from $1$ to $81$. The algorithm stops when the difference between consecutive mode estimates becomes negligible.}}
\end{table}
\section{Conclusion}\label{section5}
The mean shift algorithm is a  simple non-parametric iterative technique for finding modes of an estimated pdf. Although the mean shift algorithm has been used in many pattern recognition and machine vision applications, its convergence has not yet been proved. In this paper we proved the convergence of the mean shift algorithm in the one-dimensional space. Specifically, we proved that if the kernel $K$ has a convex, differentiable, and strictly decreasing profile $k$, then the mode estimate sequence $\{y_{j}\}_{j=1,2,\ldots}$ generated by the mean shift algorithm in the one-dimensional space is a monotone and bounded sequence and therefore the sequence converges. The convergence of the mean shift algorithm in $n$dimensional ($n>1$) space is the subject of future studies.
\bibliography{strings,refs}

\end{document}